%%%%%%%%%%%%%%%%%%%%%%%%%%%%%%%%%%%%%%%%%%%%%%%%%%%%%%%%%%%%%%%%%%%%%%%%%%%%%%%%
%2345678901234567890123456789012345678901234567890123456789012345678901234567890
%        1         2         3         4         5         6         7         8

\documentclass[letterpaper, 10 pt, conference]{ieeeconf}  % Comment this line out if you need a4paper

\IEEEoverridecommandlockouts                              % This command is only needed if 
                                                          % you want to use the \thanks command

\overrideIEEEmargins                                      % Needed to meet printer requirements.

%In case you encounter the following error:
%Error 1010 The PDF file may be corrupt (unable to open PDF file) OR
%Error 1000 An error occurred while parsing a contents stream. Unable to analyze the PDF file.
%This is a known problem with pdfLaTeX conversion filter. The file cannot be opened with acrobat reader
%Please use one of the alternatives below to circumvent this error by uncommenting one or the other
%\pdfobjcompresslevel=0
%\pdfminorversion=4

% See the \addtolength command later in the file to balance the column lengths
% on the last page of the document

% The following packages can be found on http:\\www.ctan.org
\usepackage{graphics} % for pdf, bitmapped graphics files
\usepackage{epsfig} % for postscript graphics files
\usepackage{times} % assumes new font selection scheme installed
\usepackage{amsmath} % assumes amsmath package installed
\usepackage{amssymb}  % assumes amsmath package installed
\usepackage[linesnumbered,boxed,ruled,commentsnumbered,noend]{algorithm2e}
\usepackage{multirow}
\usepackage{booktabs}
\usepackage{threeparttable}

\newtheorem{lemma}{Lemma}

\newtheorem{definition}{Definition}

\DeclareMathOperator*{\argmin}{arg\,min}

\title{\LARGE \bf
Homotopy-aware Multi-agent Navigation via Distributed Model Predictive Control
}

\author{ Haoze Dong$^{1}$, Meng Guo$^{1}$, Chengyi He$^{2}$ and  Zhongkui Li$^{1}$
\thanks{
The authors are with $^{1}$the School of Advanced Manufacturing and Robotics, Peking University, Beijing 100871, China; and $^{2}$the School of Computer Science and Engineering, Beihang University, Beijing 100191, China.
Corresponding author: Zhongkui Li, {\tt\small zhongkli@pku.edu.cn}. 
}
}

\begin{document}

\maketitle
\thispagestyle{empty}
\pagestyle{empty}

%%%%%%%%%%%%%%%%%%%%%%%%%%%%%%%%%%%%%%%%%%%%%%%%%%%%%%%%%%%%%%%%%%%%%%%%%%%%%%%%
\begin{abstract}

Multi-agent trajectory planning requires ensuring both safety and efficiency, yet deadlocks remain a significant challenge, especially in obstacle-dense environments.
Such deadlocks frequently occur when multiple agents attempt to traverse the same long and narrow corridor simultaneously.
To address this, we propose a novel distributed trajectory planning framework that bridges the gap between global path and local trajectory cooperation.
At the global level, a homotopy-aware optimal path planning algorithm is proposed, which fully leverages the topological structure of the environment. 
A reference path is chosen from distinct homotopy classes by considering both its spatial and temporal properties, leading to improved coordination among agents globally.
At the local level, a model predictive control-based trajectory optimization method is used to generate dynamically feasible and collision-free trajectories.
Additionally, an online replanning strategy ensures its adaptability to dynamic environments. 
Simulations and experiments validate the effectiveness of our approach in mitigating deadlocks.
Ablation studies demonstrate that by incorporating time-aware homotopic properties into the underlying global paths, our method can significantly reduce deadlocks and improve the average success rate from 4\%-13\% to over 90\% in randomly generated dense scenarios.

\end{abstract}

% \vspace{0.1cm}
\section*{Supplementary Materials}

\noindent \textbf{Code \& Video:} https://github.com/HauserDong/HomoMPC

% \vspace{0.1cm}

%%%%%%%%%%%%%%%%%%%%%%%%%%%%%%%%%%%%%%%%%%%%%%%%%%%%%%%%%%%%%%%%%%%%%%%%%%%%%%%%
\section{Introduction}

Multi-agent trajectory planning (MATP) is a fundamental problem in which agents must avoid collisions with both other agents and obstacles while reaching their targets.
In MATP, safety and efficiency are two critical factors.
While significant progress has been made in ensuring safety guarantees, as discussed in the literature \cite{tordesillas2021mader, park2022online}, the efficiency of MATP remains challenged by issues such as local congestion and the deadlock problem where agents block each other indefinitely, preventing further progress~\cite{grover2023before, chen2023multi}.

\subsection{Related Work}

\begin{figure}[t]
        \centering
        \includegraphics[width=\linewidth]{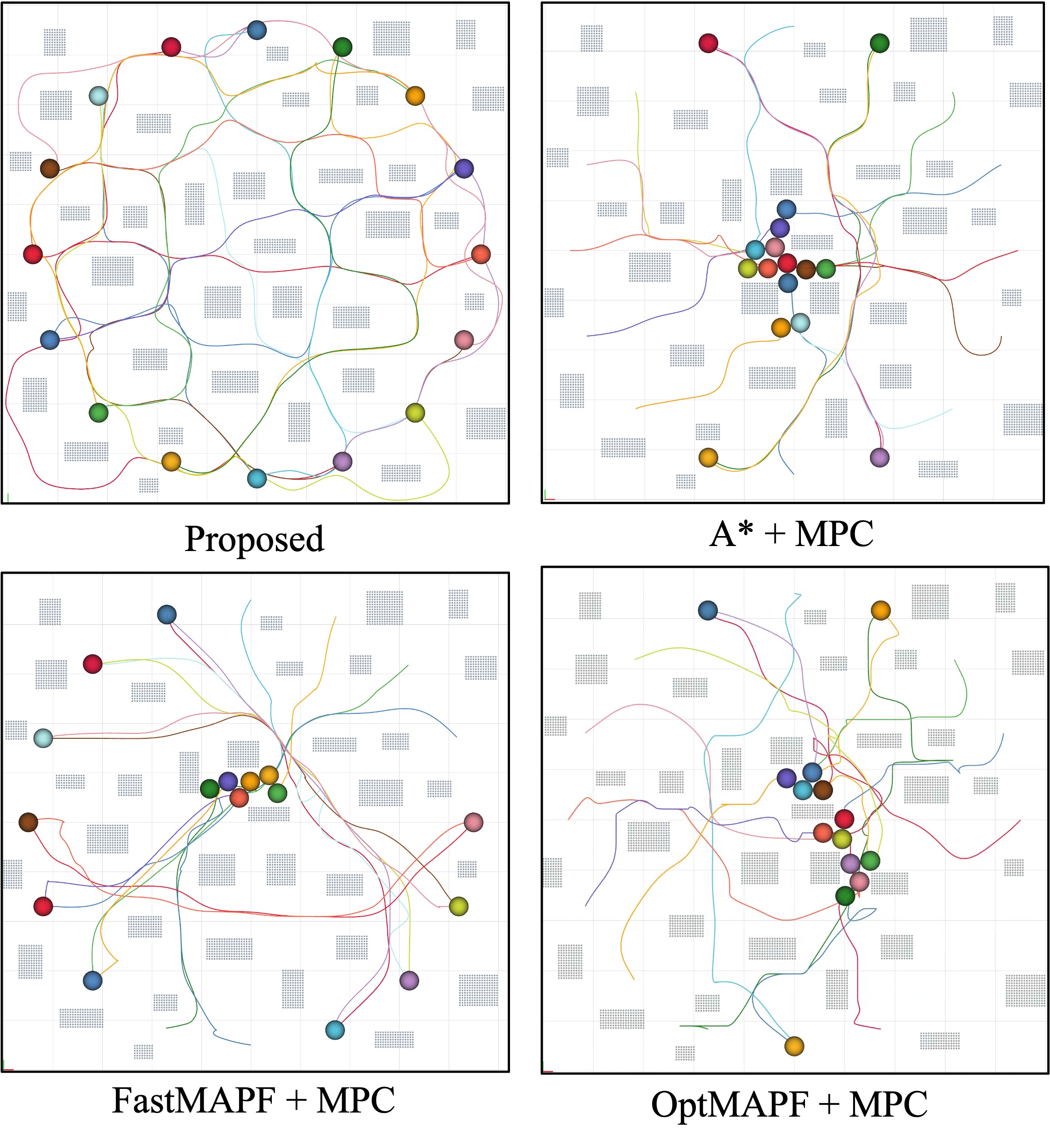}
        % \vspace{-0.4cm}
        \caption{Trajectory comparison of different global path planning methods combined with the same local trajectory planner. Our method leverages the topological structure to achieve spatio-temporal coordination.}
        \label{fig:compare-trajectory}
        \vspace{-0.4cm}
\end{figure}

For MATP problem, a classic \emph{Global} and \emph{Local} two-layer framework is widely adopted \cite{zhou2021raptor, park2023decentralized, cavorsi2023multirobot}.
In the global layer, a reference path is first planned to guide the agent through obstacles.
Then, in the local layer, agents generate trajectories based on the path and safety constraints.

The most straightforward methods directly adopt a single-agent path planning (SAPP) algorithm to generate the global path \cite{tordesillas2021mader, chen2023multi}.
However, the SAPP algorithm finds the optimal path for each individual agent, which may not be the best choice for the whole system.
Local congestion may easily occur, especially in obstacle-dense environments, leading to inefficient path planning.
To improve global cooperation, multi-agent path finding (MAPF) algorithms offer a promising approach \cite{stern2019multi}.
Nevertheless, they still face limitations.
First, MAPF algorithms require strict time-stamped execution, which is impractical in real-world scenarios where agents have dynamic constraints and cope with uncertainties.
Second, although several works have been proposed to integrate MAPF with trajectory optimization \cite{park2023decentralized, hou2022enhanced}, the topological structure of the environment is not considered in MAPF, possibly leading multiple agents to converge in the same region and thus exacerbate local congestion.

Homotopy-based studies have already incorporated the environment's topological structure into trajectory planning.
Two trajectories that connect the same start and goal belong to the same homotopy class if one can be continuously deformed into the other without colliding with any obstacle \cite{bhattacharya2012topological}. 
Otherwise, they belong to different homotopy classes.
For a single agent, distinct homotopy classes of trajectories can be generated to avoid local minima in unknown \cite{zhou2021raptor} or dynamic environments \cite{de2024topology}.
As for multi-agent system, homotopic path set planning is employed to plan the paths for an inseparable group of agents 
\cite{huang2024homotopic,mao2024optimal}.
Under MATP, \cite{zhou2021ego} generates distinct homotopy trajectories by adjusting the initial colliding trajectory around obstacles, then selects the one with the lowest cost for swarm collision avoidance. However, this local homotopy cooperation may not be enough to prevent global congestion.
\cite{kasaura2023homotopy} proposes a centralized framework for generating multiple homotopically distinct solutions to the MAPF problem, followed by their optimization. This approach entails substantial offline computational overhead.

\subsection{Our Method}

This paper presents a novel distributed
trajectory planning framework that enables agents to cooperatively navigate in obstacle-dense environments.
At the global level, we propose a homotopy-aware optimal path planning algorithm that fully exploits the topological structure of the environment and considers potential spatio-temporal conflicts with other agents.
To realize this, a complete passage detection mechanism is introduced to identify passages with their geometric details, while passage time maps are calculated to characterize the temporal behavior of agents passing through these passages.
The proposed algorithm implicitly selects the path that minimizes potential spatio-temporal conflicts.
At local level, each agent employs a model predictive control (MPC)-based trajectory optimization method to generate dynamically feasible and collision-free trajectories. 
Additionally, an online replanning strategy is introduced to ensure that agents can adapt to dynamic environments in real time.

Main contributions of this work are three-fold: 
(i) a homotopy-aware path planning algorithm with online replanning that cooperatively plans each agent's global path, considering the environment’s topology and spatio-temporal conflicts for improved efficiency and robustness; 
(ii) a complete passage detection mechanism that identifies geometric details and a passage time map capturing the temporal behavior of agents, together providing a spatio-temporal representation of the global path;
(iii) extensive ablation studies show that the proposed method effectively prevents local congestion and deadlock, improving the average success rate in dense random scenarios from 4\%-13\% to over 90\%.

\section{Problem Formulation}

Consider a group of $N$ agents, each having a radius of $r$, navigating in a shared 2D workspace containing static obstacles. 
Each agent is capable of determining its own control input and communicating with each other.

\subsection{Global Path Planning}

Let $\mathcal{O} \subset \mathbb{R}^2$ denote the set of obstacles' occupied space. 
Similar to \cite{chen2023multi}, obstacles are assumed to be convex. 
This is because a nonconvex obstacle can be decomposed into several convex obstacles or enclosed within a larger convex one.
For each agent $i \in \mathcal{N} \triangleq \{ 1, 2, \cdots, N \}$, a reference path $\sigma^i: [0,1] \rightarrow \mathcal{F}$ is first generated to guide the agent's movement based on the global map, where $\mathcal{F} = \mathbb{R}^2 \setminus \mathcal{\tilde{O}}$ is the free space and $\tilde{\mathcal{O}}$ is $\mathcal{O}$ inflated by $r$.
Sampling-based methods (e.g., RRT*, PRM), search-based methods (e.g., Dijkstra, A*) and other sophisticated techniques can be employed to obtain a collision-free path for a single agent.
However, without appropriate global coordination, selfish agents may result in local congestion since they only follow their own optimal paths.
In this paper, our goal is to design a distributed global cooperative path planning method that fully utilizes the free space in the obstacle environment.

\subsection{Local Trajectory Planning} \label{sec:local_traj_plan}

\subsubsection{Agent Dynamics}

The dynamic model of agent $i \in \mathcal{N}$ is given by
% \begin{equation} \label{eq:dynamics}
%         x_k^i(t) = \textbf{A} x_{k-1}^i(t) + \textbf{B} u_{k-1}^i(t),
% \end{equation}
$x_k^i(t) = \textbf{A} x_{k-1}^i(t) + \textbf{B} u_{k-1}^i(t),$
where $k \in \mathcal{K} = \{ 1, \cdots, K\}$ is the planning step.
$x_k^i(t) = [p_k^i(t), v_k^i(t)]$ is the planned state at time $t+kh$ for agent $i$, where $p_k^i(t)$ and $v_k^i(t)$ are the planned position and velocity, respectively, $h>0$ is the sampling time.  
$u_{k}^i(t)$ represents the planned control input. 
$\textbf{A} = \begin{bmatrix}
        \textbf{I}_2 & h\textbf{I}_2 \\
        \textbf{0}_2 & \textbf{I}_2
\end{bmatrix}$ 
and $\textbf{B} = \begin{bmatrix}
        \frac{h^2}{2}\textbf{I}_2 \\
        h \textbf{I}_2
\end{bmatrix}$ are the state transition matrix and control input matrix, respectively.
Moreover, the dynamical constraints are given by 
% \begin{align}
%         \| {\bf\Theta}_u  u_{k-1}^i(t) \|_2 &\le u_{\text{max}}, k \in \mathcal{K}, \label{eq:acc-cons} \\ 
%         \| {\bf\Theta}_v  v_{k}^i(t) \|_2 &\le v_{\text{max}}, k \in \mathcal{K}, \label{eq:vel_cons}
% \end{align}
$\| {\bf\Theta}_u  u_{k-1}^i(t) \|_2 \le u_{\text{max}}, k \in \mathcal{K}, \| {\bf\Theta}_v  v_{k}^i(t) \|_2 \le v_{\text{max}}, k \in \mathcal{K},$
where ${\bf\Theta}_u$ and ${\bf\Theta}_v$ are given positive-definite matrices, and $u_{\text{max}}$ and $v_{\text{max}}$ are the maximum control input and velocity, respectively.

\subsubsection{Inter-agent Collision Avoidance} \label{sec:collision-avoidance} 

Each agent $i \in \mathcal{N}$ can be represented as a circle $\mathcal{R}^i = \{p^i + x \ |\  \| x \|_2 \le r \}$. 
The collision avoidance constraints between different agents can be directly formulated as 
% \begin{equation}\label{eq:inter-agent-collision-avoidance-origin}
%         \| p^i - p^j \| \ge 2r.
% \end{equation} 
$\| p^i - p^j \| \ge 2r$, where $p^i, p^j$ are positions of agents $i$ and $j$, respectively.

\subsubsection{Obstacle Avoidance} \label{sec:obstacle-avoidance}
The constraints of obstacle avoidance require that each agent does not collide with any obstacle in the environment,
that is, 
% \begin{equation}\label{eq:obstacle_avoidance-origin}
%         \mathcal{R}^i \cap \mathcal{O} = \emptyset, \forall i \in \mathcal{N}.
% \end{equation}
$\mathcal{R}^i \cap \mathcal{O} = \emptyset, \forall i \in \mathcal{N}$.

\subsection{Problem Statement}
Assuming that agents are all collision-free at the initial time $t_0$, our objective is to generate safe trajectories for all agents to their respective targets $p^i_{\text{target}}, \forall i \in \mathcal{N}$, while respecting constraints mentioned above. 
In the following sections, we mainly focus on a distributed framework that promotes cooperation among agents in both global path planning and local trajectory planning.

\section{Proposed Method} 

\begin{figure*} [t]
	\centering
	\includegraphics[width=\linewidth]{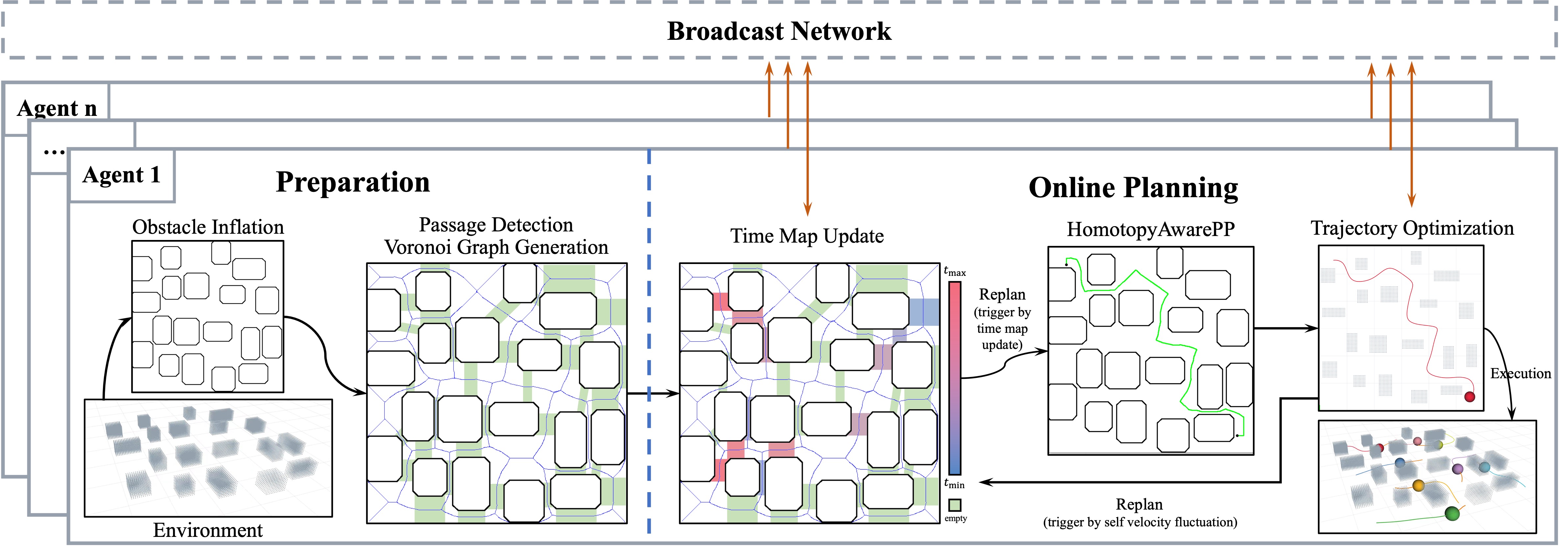}
    % \vspace{-0.2cm}
	\caption{Illustration of the proposed framework, which consists of a map preparation module and an online planning module.}
    % \vspace{-0.4cm}
	\label{fig:framework}
\end{figure*}

As shown in Fig.~\ref{fig:framework}, our framework consists of two modules: map preparation and online planning.
The map preparation module acquires the environment map, detects complete passages (Sec.~\ref{sec:complete-passages-detection}), and generates the Voronoi graph (Sec.~\ref{sec:path-search-on-voronoi-graph}), all in a distributed manner by each agent.
The online planning module generates the global path and local trajectory for each agent. Agents share and update their passage time maps (\ref{sec:passage-time-map}) via a broadcast network.
Based on this, the homotopy-aware optimal path planning algorithm (Sec.~\ref{sec:homotopy-aware-optimal-a-star}) is employed to generate the global path, taking into account the environment's topological structure and potential spatio-temporal conflicts with other agents.
Each agent then generates its trajectory through trajectory optimization (Sec.~\ref{sec:trajectory-optimization}) and broadcasts its trajectory to others. 
An online replanning strategy (Sec.~\ref{sec:ReplanCheck}) ensures real-time adaptation to environments.

\subsection{Homotopy-Aware Optimal Path Planning} \label{sec:homotopy-aware-optimal-path-planning}

\subsubsection{Path Search on Voronoi Graphs} \label{sec:path-search-on-voronoi-graph}

% For multi-agent cooperation in obstacle-dense environments, each agent needs to choose an appropriate path to realize overall coordination.
% As proposed in \cite{rosmann2017integrated}, a Voronoi diagram of the environment can be helpful for complete exploration of all feasible paths.
% In this paper, the Voronoi diagram partitions the 2D workspace into different cells based on the Euclidean distance to the obstacles in $\mathcal{\tilde{O}}$.
% Each cell $\mathcal{C}^s$ contains the points that are \emph{strictly} closer to obstacle $\mathcal{\tilde{O}}^s$ than to any other obstacle.
% Then, the Voronoi graph $\mathcal{V}$ can be generated from the discretization of all cells' boundaries $\mathbb{R}^2 \backslash \cup_{s=1}^{s=|\mathcal{\tilde{O}}|} \mathcal{C}^s$.
% Each discretized point is a vertex in $\mathcal{V}$, and the line segment between two consecutive vertices is a bidirectional edge in $\mathcal{V}$.
% For every pair of start $p_{\text{start}}$ and target $p_{\text{target}}$, one can first find the nearest vertices $\zeta_{\text{start}}$ and $\zeta_{\text{target}}$ in $\mathcal{V}$. A collision-free edge between $p_{\text{start}}$ (or $p_{\text{target}}$) and $\zeta_{\text{start}}$ (or $\zeta_{\text{target}}$) is then generated.
% Afterward, search-based methods (e.g., A*, Dijkstra) can be applied to find a path connecting $p_{\text{start}}$ and $p_{\text{target}}$ on $\mathcal{V}$.

In obstacle-dense environments, multi-agent coordination requires selecting appropriate paths.
A Voronoi diagram, as proposed in \cite{rosmann2017integrated}, partitions the 2D workspace into cells based on Euclidean distance to obstacles within the inflated obstacle environment $\mathcal{\tilde{O}}$.
This environment $\mathcal{\tilde{O}}$ consists of several convex obstacles $\mathcal{\tilde{O}}^s$ ($s=1,2, \cdots, | \mathcal{\tilde{O}} |$), where $| \mathcal{\tilde{O}} |$ denotes the number of obstacles.
Each cell $\mathcal{C}^s$ contains points closer to a specific obstacle $\mathcal{\tilde{O}}^s$ than any other. The Voronoi graph $\mathcal{V}$ is generated by discretizing the boundaries of these cells, with each discretized point becoming a vertex $\zeta$, and edges connecting consecutive vertices. To find a path between a start point $p_{\text{start}}$ and target $p_{\text{target}}$, the nearest vertices $\zeta_{\text{start}}$ and $\zeta_{\text{target}}$ are identified. Collision-free edges are generated between $p_{\text{start}}$ (or $p_{\text{target}}$) and $\zeta_{\text{start}}$ (or $\zeta_{\text{target}}$), and search algorithms like A* or Dijkstra can be used to find the path on $\mathcal{V}$.

\subsubsection{Homotopy-Aware Optimal A* Algorithm} \label{sec:homotopy-aware-optimal-a-star}

The traditional path search on $\mathcal{V}$ is not sufficient since it only considers the agent's own path.
In order to realize global spatio-temporal coordination, the homotopy-aware optimal A* algorithm is proposed.
For a search-based method like A*, a cost function is crucial for determining the final path.
In this paper, the cost function has the following form:
\begin{equation} \label{eq:cost-function}
        \hat{f}(\tau) = \text{Len}(\tau) + \text{Dist}(\tau) - \lambda_P f_P(\tau) + \lambda_H f_H(\tau),
\end{equation}
where $\tau \in [0,1]$ is the argument of path $\sigma^i$.
The first two terms are traditional A* cost function elements where 
$\text{Len}(\tau)$ is the accumulated length from $\sigma^i(0)$ to $\sigma^i(\tau)$
and $\text{Dist}(\tau)$ is the Euclidean distance from $\sigma^i(\tau)$ to $p^i_{\text{target}}$.

Inspired by \cite{huang2024homotopic}, the third term is added to penalize the minimum passage width passed by $\sigma^i$ so far.
As shown in Fig.~\ref{fig:framework}, the complete passage detection is conducted to identify passages formed between pairs of obstacles.
The set of complete passages passed by $\sigma^i$ from $\sigma^i(0)$ to $\sigma^i(\tau)$ is denoted as $\mathcal{P}^i(\tau)$. 
The $q$-th element is denoted as $\mathcal{P}^i(\tau,q)$.
$f_P(\tau) = \min \| \mathcal{P}^i(\tau) \|$ is the shortest  passage line segment passed by $\sigma^i$.
$\lambda_P > 0$ is the weight of this term.
This term encourages the agent to pass through wider passages.
% In \cite{huang2024homotopic}, passing wider passages is needed to render the pivot path enough space to generate a homotopic path set.
% While in this paper, the wider passage is beneficial for agents to obtain enough space to resolve local congestion.

The fourth term aims to cooperate with other agents' path.
When agent $i$ is planning, only the passage time maps of higher-priority agents (e.g., those with smaller indices) are considered. These maps reflect the expected times at which agents pass through respective passages.
The passage time map according to $\mathcal{P}^i(\tau)$ is $\mathcal{T}^i(\tau)$.
A conflict index $\mathcal{CI}(i, j, \mathcal{P}^i(\tau,q))$ is defined to measure the potential spatio-temporal conflict between agent $i$ and agent $j$ at the complete passage $\mathcal{P}^i(\tau,q)= \mathcal{P}(\mathcal{\tilde{O}}^s, \mathcal{\tilde{O}}^c)$, where both $s$ and $c$ serve as obstacle indices.
If agent $j$ does not pass through $\mathcal{P}^i(\tau,q)$, $\mathcal{CI}(i, j, \mathcal{P}^i(\tau,q)) = 0$.
Otherwise, let us first denote the time spans of agent $i$ and $j$ passing through $\mathcal{P}^i(\tau,q)$ as $\mathbf{T}^{i,s,c} = [t^{i,s,c}_1 , t^{i,s,c}_2]$ and $\mathbf{T}^{j,s,c} = [t^{j,s,c}_1 , t^{j,s,c}_2]$.
If $\mathbf{T}^{i,s,c} \cap \mathbf{T}^{j,s,c} \neq \emptyset$, then $\mathcal{CI}(i, j, \mathcal{P}^i(\tau,q)) = 1$.
If $\mathbf{T}^{i,s,c} \cap \mathbf{T}^{j,s,c} = \emptyset$, then the conflict index is calculated as
$\mathcal{CI}(i, j, \mathcal{P}^i(\tau,q)) =  \exp \{ \alpha \cdot \delta(\mathbf{T}^{i,s,c}, \mathbf{T}^{j,s,c}) \}$,
where $  \delta(\mathbf{T}^{i,s,c}, \mathbf{T}^{j,s,c}) = |\max(t^{i,s,c}_1 ,t^{j,s,c}_1) - \min(t^{i,s,c}_2,t^{j,s,c}_2)|$ and $\alpha < 0$ is a constant showing the tolerance of temporal conflict.
With conflict index, $f_H(\tau)$ can be formulated as
\begin{equation} \label{eq:homotopic-term}
        f_H(\tau) = \sum_{q=1}^{|\mathcal{P}^i(\tau)|} \sum_{j<i} \mathcal{CI}(i, j, \mathcal{P}^i(\tau,q)),
\end{equation}
where $|\mathcal{P}^i(\tau)|$ is the number of complete passages in $\mathcal{P}^i(\tau)$.

Although the cost function is redesigned, the proposed algorithm can still be applied to find the optimal path on $\mathcal{V}$.
\begin{lemma}
        The algorithm is guaranteed to find the optimal path on the Voronoi graph $\mathcal{V}$ under the cost function \eqref{eq:cost-function}.
\end{lemma}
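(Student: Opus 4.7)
The plan is to view the proposed search as A* under the decomposition $\hat{f}(\tau)=g(\tau)+h(\tau)$ with $g(\tau)=\text{Len}(\tau)-\lambda_P f_P(\tau)+\lambda_H f_H(\tau)$ and heuristic $h(\tau)=\text{Dist}(\tau)$, and then to invoke the classical A* optimality theorem once $h$ is shown to be admissible. Because $f_P$ is a minimum and $f_H$ a cumulative sum over the traversed complete passages, I would make the search explicit on the (finite) tree of partial paths on $\mathcal{V}$, where each node carries enough history for $g$ to be well-defined; equivalently, a vertex of $\mathcal{V}$ is re-expanded whenever a strictly smaller $g$-value reaches it. This lift preserves the A* invariant that the first target state popped from the priority queue has minimum $\hat{f}$.

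The core step is the admissibility check: $h(\tau)\le g(\tau')-g(\tau)$ for every extension of a partial path from argument $\tau$ to a target-reaching argument $\tau'$. I would bound the three increments separately. The length increment satisfies $\text{Len}(\tau')-\text{Len}(\tau)\ge\text{Dist}(\tau)$ by the triangle inequality, since any curve joining $\sigma^i(\tau)$ to $p^i_{\text{target}}$ is at least as long as the straight line. The width term satisfies $-\lambda_P\bigl[f_P(\tau')-f_P(\tau)\bigr]\ge 0$ because $\mathcal{P}^i(\tau')\supseteq\mathcal{P}^i(\tau)$ and $f_P$ is a minimum over passage widths, so $f_P(\tau')\le f_P(\tau)$ while $\lambda_P>0$. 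The homotopy term satisfies $\lambda_H\bigl[f_H(\tau')-f_H(\tau)\bigr]\ge 0$ because every conflict index is non-negative by definition and only additional non-negative summands are appended as more passages are traversed. Summing the three inequalities yields the required admissibility bound.

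The hard part of the plan is coping with the path-dependence of $g$: the incremental cost of adding an edge is not a property of that edge alone but depends on which passages have already been crossed, which violates the per-edge-cost assumption implicit in textbook A*. I would address this by making the bookkeeping on partial paths explicit (or, equivalently, by allowing re-expansion whenever an incoming history strictly improves $g$), which still terminates because $\mathcal{V}$ is finite. Combined with the admissibility bound above, the standard A* optimality argument then applies and guarantees that the returned path minimizes the cost \eqref{eq:cost-function} on $\mathcal{V}$.
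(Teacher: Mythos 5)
Your admissibility argument is exactly the paper's: the same split $\hat f=\hat g+\hat h$ with $\hat h=\text{Dist}$, the triangle inequality for the length increment, and non-negativity of the $f_P$- and $f_H$-increments (since $f_P$ can only shrink and the conflict indices are non-negative), followed by an appeal to the classical A* optimality theorem of Hart et al. Where you go beyond the paper is in flagging that $\hat g$ is not edge-additive, which is indeed the hypothesis of that theorem that neither the paper's proof nor plain admissibility addresses; your instinct to lift the search to states carrying the traversal history is the right repair.

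However, your claimed equivalence between that lift and ``re-expand a vertex of $\mathcal{V}$ whenever a strictly smaller $\hat g$-value reaches it'' is false, and with it this version of the argument breaks. Because $f_P$ is a minimum over the passages crossed so far, a partial path that is $\hat g$-worse at a shared vertex can still dominate after completion: take $\lambda_P=50$, $\lambda_H$-terms zero, and two partial paths to the same vertex $\zeta$, path $A$ with $\text{Len}=10$ and narrowest crossed passage $1.0$, path $B$ with $\text{Len}=9$ and narrowest crossed passage $0.5$. Then $\hat g_A=10-50=-40<\hat g_B=9-25=-16$, so your rule discards $B$. But if the only continuation from $\zeta$ to the target has length $L$ and crosses a passage of width $0.1$, the completed costs are $10+L-5=L+5$ for $A$ and $9+L-5=L+4$ for $B$, so the pruned history $B$ was the optimum. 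Hence $\hat g$-dominance pruning at graph vertices is unsound for this cost; you must either search the (finite) tree of simple partial paths, or augment the search state with the history that the cost actually depends on (e.g., the current value of $f_P$ and the set of crossed passages entering $f_H$), pruning only genuinely dominated states. With that correction the rest of your admissibility argument goes through and yields the lemma; the ``equivalently'' shortcut does not.
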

\begin{proof}
        We can reformulate the cost function as 
        $\hat{f}(\tau) = \hat{g}(\tau) + \hat{h}(\tau)$,
        where $\hat{g}(\tau) = \text{Len}(\tau) - \lambda_P f_P(\tau) + \lambda_H f_H(\tau)$ and $\hat{h}(\tau) = \text{Dist}(\tau)$.
        $\hat{g}(\tau)$ is the cost of the path from $p_{\text{start}}$ to a specific vertex $\sigma^i(\tau)$ with minimum cost so far and $\hat{h}(\tau)$ is the estimate from that vertex to $p_{\text{target}}$.
        Let the actual cost of an optimal path from $\sigma^i(\tau)$ to $p_{\text{target}}$ be $h(\tau) =  h_1(\tau) + h_2(\tau) + h_3(\tau),$
        where $ h_1(\tau)=\text{Len}(1) - \text{Len}(\tau)$, $h_2(\tau)= - \lambda_P (f_P(1) - f_P(\tau))$ and $h_3(\tau) =  \lambda_H (f_H(1) - f_H(\tau))$.
        Since $h_1(\tau) \ge \hat{h}(\tau)$, 
        $h_2(\tau)$ and $h_3(\tau)$ are both non-negative, then $h(\tau) \ge \hat{h}(\tau)$. 
        According to Theorem 1 in~\cite{hart1968formal}, then A* is guaranteed to find the optimal path. 
\end{proof}

Another perspective to understand the proposed algorithm is to consider the concept of homotopy.
Homotopic paths are defined as follows \cite{bhattacharya2012topological}:
\begin{definition}
      Two paths $\sigma_1$ and $\sigma_2$ connecting the same start and target points are homotopic if and only if one can be continuously deformed into the other without intersecting any obstacle. The set of all homotopic paths is called the homotopy class. 
\end{definition}

\begin{figure} [t]
	\centering
	\includegraphics[width=\linewidth]{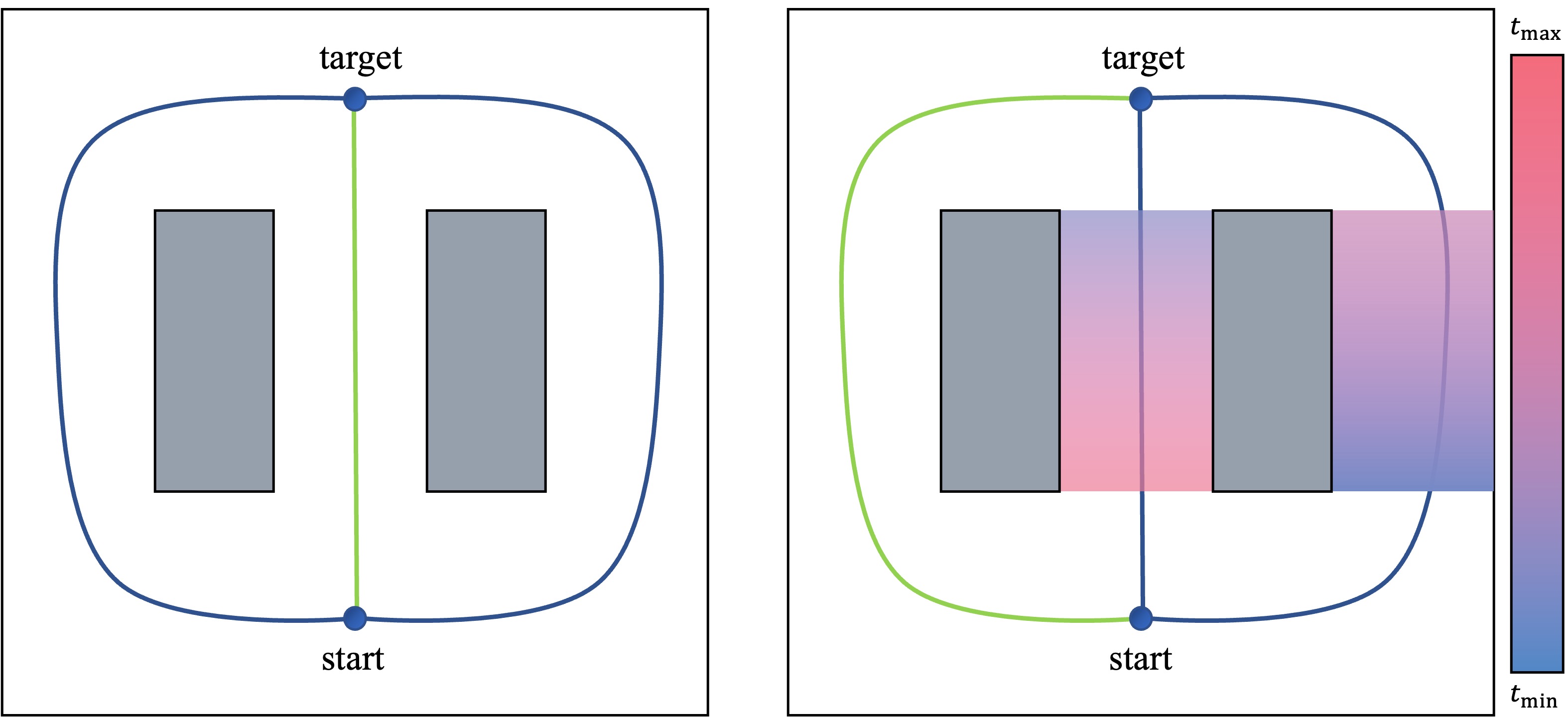}
    % \vspace{-0.1cm}
	\caption{
    Illustration of time-aware homotopy classes for global coordination (from a single agent’s perspective).
    \textbf{Left}: Without considering temporal information, global coordination is difficult to achieve through homotopy classes.
    \textbf{Right}: Incorporating other agents’ passage time maps enables clear distinction of paths based on coordination. The gradient color represents the temporal behavior of other agents passing through passages.
    }
    % \vspace{-0.6cm}
	\label{fig:time-homotopy}
\end{figure}

Global path coordination can be viewed as each agent selecting an appropriate path among all its homotopy classes.
The Voronoi graph $\mathcal{V}$ already provides simplified topological information about the environment.
Different paths belonging to distinct homotopy classes can be directly found on $\mathcal{V}$.
The proposed algorithm combines two steps together: searching paths in different homotopy classes on $\mathcal{V}$ and selecting the best path based on the cost function~\eqref{eq:cost-function}.
Since the optimality of the algorithm is guaranteed, there is no need to exhaustively search all homotopy classes.
Thus, the proposed algorithm can be considered \emph{homotopy-aware}.
Traditionally, homotopy is a purely spatial concept that ignores agents' temporal behavior. 
In this paper, we extend homotopy by incorporating temporal properties through a passage time map. 
Fig.~\ref{fig:time-homotopy} illustrates how this time-aware homotopy improves global path coordination, enabling spatio-temporal cooperation among agents.

\subsection{Complete Passages Detection and Passage Time Map} \label{sec:complete-passages-detection-and-passage-time-map}

In multi-agent systems, narrow spaces can cause spatio-temporal congestion. Thus, cooperatively utilizing free space is key to avoiding clustering. This section details two components from Sec.~\ref{sec:homotopy-aware-optimal-path-planning}—complete passage detection and the passage time map—to characterize the spatio-temporal properties of the global path.

\subsubsection{Complete Passages Detection} \label{sec:complete-passages-detection}

Each ordered pair of inflated obstacles $\mathcal{\tilde{O}}^s, \mathcal{\tilde{O}}^c \ (s<c)$ forms a generic passages, denoted as $(\mathcal{\tilde{O}}^s, \mathcal{\tilde{O}}^c)$.
In \cite{huang2024homotopic}, an extended visibility check in passage detection is proposed to obtain valid passages for free space determination.
% A passage $(\mathcal{\tilde{O}}^s, \mathcal{\tilde{O}}^c)$ is considered valid if the passage segment is collision-free with other obstacles and there is no other obstacle inside the circle defined by the passage segment.
It uses the shortest line segment between obstacles as a compact representation of the passage.
However, as illustrated in Fig.~\ref{fig:complete-passage-illustration}~(a-b), this results in the loss of geometric information about the passage, which cannot fully reflect the spatial behavior of agent passing through the passage.

\begin{figure} [t]
	\centering
	\includegraphics[width=\linewidth]{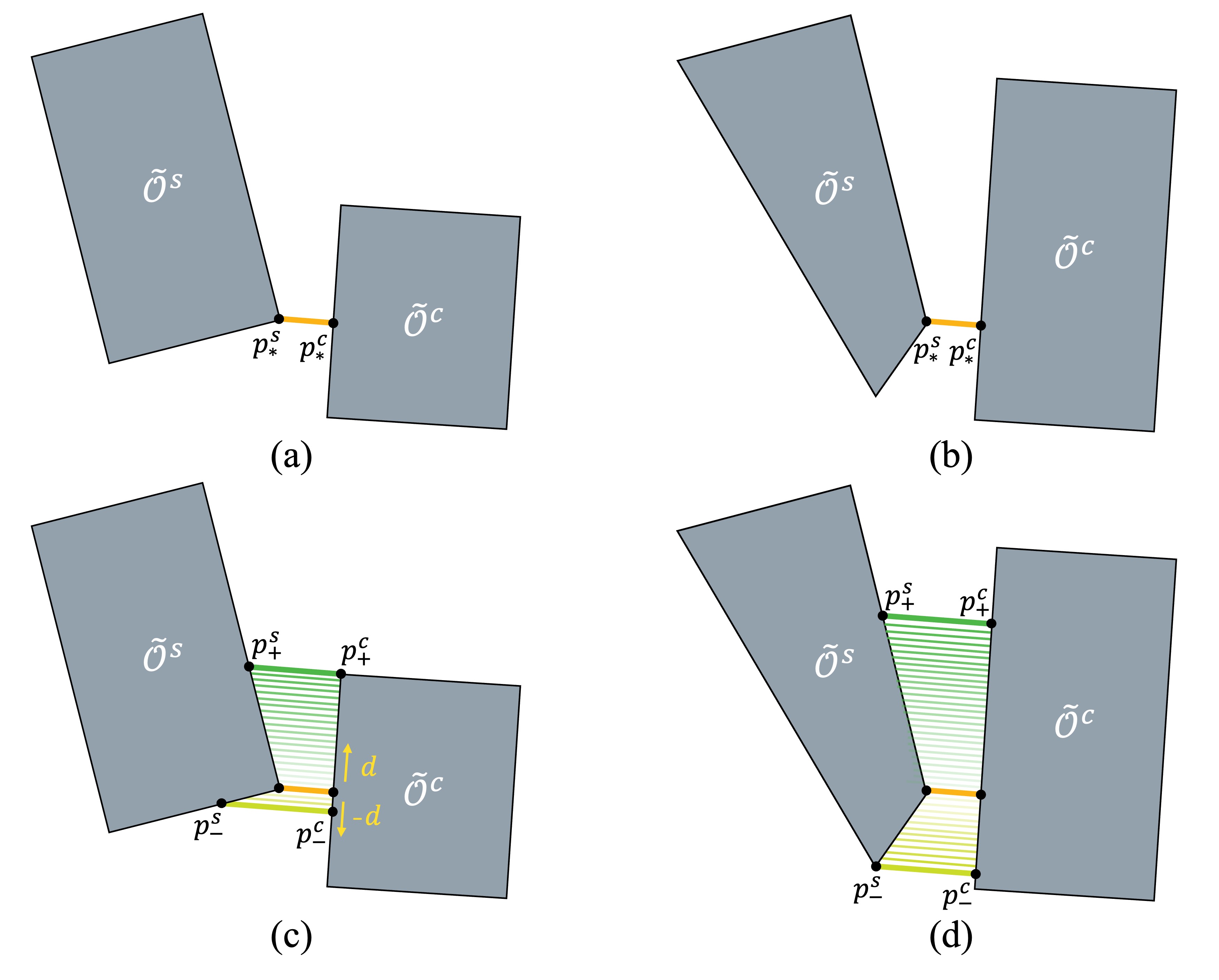}
    % \vspace{-0.2cm}
	\caption{Illustration of complete passage detection. (a-b) Different pairs of obstacles have identical shortest line segments. (c-d) With complete passage detection, the spatial properties of the passage can be fully characterized.}
    % \vspace{-0.6cm}
	\label{fig:complete-passage-illustration}
\end{figure}

To obtain complete passages, the segment is extended to capture additional geometric details.
Assume that the passage $(\mathcal{\tilde{O}}^s, \mathcal{\tilde{O}}^c)$ is valid after the extended visibility check.
Let us denote the shortest segment between $\mathcal{\tilde{O}}^s$ and $\mathcal{\tilde{O}}^c$ as $(p^s_*,p^c_*)$,
where 
% \begin{equation} \label{eq:shortest-passage-segment}
%         (p^s_*,p^c_*) = \argmin_{p^s \in \mathcal{\tilde{O}}^s, p^c \in \mathcal{\tilde{O}}^c} \| p^s - p^c \|_2,
% \end{equation} 
$(p^s_*,p^c_*) = \argmin_{p^s \in \mathcal{\tilde{O}}^s, p^c \in \mathcal{\tilde{O}}^c} \| p^s - p^c \|_2,$
and $l(p^s_*,p^c_*)$ is the line passing through $p^s_*$ and $p^c_*$.
Based on the segment, a shifting direction vector $d$ satisfying $d \perp (p^s_*-p^c_*)$ is chosen.
The complete passage is constructed by shifting $l(p^s_*,p^c_*)$ continuously in the direction of both $d$ and $-d$ until one of three conditions is broken:
% \begin{enumerate}
%         \item $l(p^s_* \pm \kappa d ,p^c_* \pm \kappa d) \cap \mathcal{\tilde{O}}^s \neq \emptyset$,
%         \item $l(p^s_* \pm \kappa d ,p^c_* \pm \kappa d) \cap \mathcal{\tilde{O}}^c \neq \emptyset$,
%         \item $\{p^s_\text{int}, p^c_\text{int}\} = l(p^s_* \pm \kappa d ,p^c_* \pm \kappa d) \cap (\mathcal{\tilde{O}}^s \cup \mathcal{\tilde{O}}^c), \| p^s_\text{int}- p^c_\text{int} \| \le l_\text{max}$,
% \end{enumerate}
1) $l(p^s_* \pm \kappa d ,p^c_* \pm \kappa d) \cap \mathcal{\tilde{O}}^s \neq \emptyset$; 2) $l(p^s_* \pm \kappa d ,p^c_* \pm \kappa d) \cap \mathcal{\tilde{O}}^c \neq \emptyset$; 3) $\{p^s_\text{int}, p^c_\text{int}\} = l(p^s_* \pm \kappa d ,p^c_* \pm \kappa d) \cap (\mathcal{\tilde{O}}^s \cup \mathcal{\tilde{O}}^c), \| p^s_\text{int}- p^c_\text{int} \| \le l_\text{max}$,
where $\kappa > 0$ is the shifting step size and $l_\text{max}>0$ is the predefined maximum length of a passage segment.
The complete passage is denoted as 
\begin{equation} \label{eq:complete-passage}
        \mathcal{P}(\mathcal{\tilde{O}}^s, \mathcal{\tilde{O}}^c) = ((p^s_+,p^c_+), (p^s_*,p^c_*), (p^s_-,p^c_-)),
\end{equation}
in which $(p^s_+,p^c_+)$ and $(p^s_-,p^c_-)$ are the farthest segments that satisfy the above conditions (Fig.~\ref{fig:complete-passage-illustration}~(c-d)).
The representation of the complete passage can be interpreted as the \emph{entrance}, the \emph{narrowest} part and the \emph{exit} of the passage, respectively.

\subsubsection{Passage Time Map} \label{sec:passage-time-map}

After complete passage detection, the passage time map for agent $i$ can be generated to characterize the temporal property of its path $\sigma^i$.
% The complete passages passed by $\sigma^i$ from $\sigma^i(0)$ to $\sigma^i(1)$ can be recorded in an ordered list $\mathcal{P}^i(1) = \{ \mathcal{P}(\mathcal{\tilde{O}}^s, \mathcal{\tilde{O}}^c),  \cdots,  \mathcal{P}(\mathcal{\tilde{O}}^m, \mathcal{\tilde{O}}^n) \}$ with each element in the form of \eqref{eq:complete-passage}.
Assume that the average velocity of agent $i$ is $\bar{v}^i$. 
Then, the passage time map $\mathcal{T}^i(1)$ can be constructed by dividing the accumulated distance from the start $\sigma^i(0)$ to the entrance or exit of each complete passage by $\bar{v}^i$.
% \begin{equation}
%         \mathcal{T}^i(1) = \{ \mathcal{T}^i(\mathcal{P}(\mathcal{\tilde{O}}^s, \mathcal{\tilde{O}}^c)),  \cdots, \mathcal{T}^i(\mathcal{P}(\mathcal{\tilde{O}}^m, \mathcal{\tilde{O}}^n)),
% \end{equation}
% where $\mathcal{T}^i(\mathcal{P}(\mathcal{\tilde{O}}^s, \mathcal{\tilde{O}}^c))$ returns a tuple of two values $t^{i,s,c}_1 < t^{i,s,c}_2 $ denoting agent $i$'s time stamps of the entrance and exit of the passage $(\mathcal{\tilde{O}}^s, \mathcal{\tilde{O}}^c)$.

\subsection{Trajectory Optimization} \label{sec:trajectory-optimization}

For the local trajectory planning, the agent needs to generate a safety-guaranteed trajectory to reach its target.
Firstly, the constraints introduced in Sec.~\ref{sec:collision-avoidance} and Sec.~\ref{sec:obstacle-avoidance} are unrealistic to be directly used.
The inter-agent avoidance constraints in Sec.~\ref{sec:collision-avoidance} can be equivalently converted to a linear constraint via the modified buffered Voronoi cells with warning band (MBVC-WB) as proposed in our previous work \cite{chen2024deadlock}:
% \begin{align} \label{eq:inter-agent-collision-avoidance}
%         {a_k^{ij}}^\top p_k^i &\ge b_k^{ij}, \  \forall j \neq i, \  \forall k \in \mathcal{K},
% \end{align}
${a_k^{ij}}^\top p_k^i \ge b_k^{ij}, \  \forall j \neq i, \  \forall k \in \mathcal{K},$
where $a_k^{ij}$ and $b_k^{ij}$ represent the normal vector and offset of the hyperplane, respectively.
The obstacle avoidance constraints from Sec.~\ref{sec:obstacle-avoidance} are reformulated using the safe corridor construction described in \cite{chen2023multi}. The constraints are expressed as a set of linear inequalities:
% \begin{equation} \label{eq:obstacle_avoidance}
%        { a_k^{i,s}}^\top p_k^i \ge b_k^{i,s}, \  k \in \mathcal{K},
% \end{equation}
${ a_k^{i,s}}^\top p_k^i \ge b_k^{i,s}, \  k \in \mathcal{K},$
where $a_k^{i,s}$ and $b_k^{i,s}$ are the normal vector and offset of the hyperplane.

The time stamps of the reference path $\sigma^i$ are determined by the predefined average velocity $\bar{v}^i$.
To ensure accurate path following, a tractive point list is generated at every planning step.
Each round of trajectory optimization will calculate the planned states at $t+h, \cdots, t+Kh$.
Thus, the tractive point list consists of $K$ points in $\sigma^i$ with the same time stamps, denoted as $\tilde{p}^i_1, \cdots, \tilde{p}^i_K$.
The objective function is defined as 
\begin{equation} \label{eq:objective}
        C^i = \frac{1}{2} \sum_{k=1}^K \left( Q_k \| p_{k}^i - \tilde{p}^i_k \|^2 + R_{k-1} \| u_{k-1}^i \|^2 \right),
\end{equation}

The overall trajectory optimization is formulated as an optimal control problem to minimize \eqref{eq:objective} under the constraints in Sec.~\ref{sec:local_traj_plan} and \ref{sec:trajectory-optimization}.
By solving the optimal control problem, the agent can obtain the optimal trajectory $\mathcal{T}raj^i$.
This trajectory is then sent to the local controller to follow until a new trajectory is generated.

\subsection{Online Replanning} \label{sec:ReplanCheck}

If the predefined average velocity $\bar{v}^i$ is feasible, the agent can follow it in an obstacle-free environment using trajectory optimization. However, in obstacle-dense environments, the agent may need to slow down or stop to avoid collisions. The original passage time map $\mathcal{T}^i$ may mislead lower-priority agents, so global replanning (see Alg.~\ref{alg:replan-check}) is required to adapt to real-time changes. At the local level, agents plan their trajectories independently, with online replanning needed to ensure safety as the environment evolves.

\begin{algorithm}[t] \label{alg:replan-check}
	\caption{\texttt{ReplanCheck}}
	\SetKwInOut{Input}{Input}
	\Input{$\Delta t$, $\bar{v}^i_{\text{new}}$}

        \If{$\Delta t \ge \beta$}
        {
                $\mathcal{T}^i \leftarrow$ Update passage time map based on $\bar{v}^i_{\text{new}}$ \label{alg:replan-check:update-passage-time-map}

                Broadcast $\mathcal{T}^i$ to $\mathcal{N}^i$ \label{alg:replan-check:broadcast}

                $score \leftarrow$ Reevaluate $\sigma^i$ by calculating $f_H(1)$ according to \eqref{eq:homotopic-term} \label{alg:replan-check:reevaluate-path}

                \If{$score > \gamma$}
                {
                        \textbf{return} \textbf{True}
                }
                \Else{
                        Update time stamps of $\sigma^i$ based on $\bar{v}^i_{\text{new}}$ \label{alg:replan-check:update-reference-path}
                }
        }
        \Else{
             \If{\rm{Agent $j$ updates $\mathcal{T}^j$, $\exists j < i$}\label{alg:replan-check:high-priority-change}} 
             {
                        $score \leftarrow$ Reevaluate $\sigma^i$ by calculating $f_H(1)$ according to \eqref{eq:homotopic-term} \label{alg:replan-check:reevaluate-path-2}

                        \If{$score > \gamma$}
                        {
                                \textbf{return} \textbf{True} \label{alg:replan-check:reevaluate-path-2-return}
                        } 
             }
        }
        \textbf{return} \textbf{False}
\end{algorithm} 

In order to trigger the global path replanning, an indicator $\Delta t$ is first defined to measure the time error.
It is calculated by dividing the tracking error by the planned velocity at the next sampling time.
If $\Delta t$ exceeds a predefined threshold $\beta$, the agent will update its passage time map $\mathcal{T}^i$ based on the new average velocity $\bar{v}^i_{\text{new}}$ (line~\ref{alg:replan-check:update-passage-time-map}).
Then, the agent will broadcast $\mathcal{T}^i$ to its neighbors $\mathcal{N}^i$ (line~\ref{alg:replan-check:broadcast}).
Afterward, the agent will reevaluate its path by calculating $f_H(1)$ according to \eqref{eq:homotopic-term} (line~\ref{alg:replan-check:reevaluate-path}).
If the result score is higher than a predefined threshold $\gamma$, the agent will return \textbf{True} to trigger the replanning.
Otherwise, the agent will only update the time stamps of its own reference path $\sigma^i$ based on $\bar{v}^i_{\text{new}}$ (line~\ref{alg:replan-check:update-reference-path}).
If $\Delta t$ is less than $\beta$, the agent will check whether any agent with higher priority (i.e., agents with smaller indices) has updated its passage time map $\mathcal{T}^j$.
If so, the agent will reevaluate its path and decided whether return \textbf{True} to trigger the replanning (line~\ref{alg:replan-check:high-priority-change}-\ref{alg:replan-check:reevaluate-path-2-return}).
Other cases will return \textbf{False}.

Local trajectory replanning is performed using MPC at each replanning interval.
Through communication, agents can access the planned trajectories of other agents and update their own safety constraints.
A new trajectory optimization problem is then solved to generate a safe, updated trajectory.

\subsection{Complete Algorithm} \label{sec:complete-algorithm}

\begin{algorithm}[t] \label{alg:complete-algorithm}
	\caption{The Complete Algorithm for Agent $i$}
	\SetKwInOut{Input}{Input}
	\Input{$p^i(t_0)$, $p^i_{\text{target}}$, $\mathcal{O}$}

        $\mathcal{P} \leftarrow$ \texttt{CompletePassagesDetection}($\mathcal{O}$) \label{alg:complete-algorithm:complete-passages-detection}

        $\mathcal{V} \leftarrow$ \texttt{GenerateVoronoiGraph}($\mathcal{O}$) \label{alg:complete-algorithm:voronoi-graph}

        \textit{ReplanFlag} $\leftarrow$ \textbf{True} \label{alg:complete-algorithm:replan-flag-init}

        $\mathcal{T}, \ \mathcal{T}raj \leftarrow \emptyset$ \label{alg:complete-algorithm:traj-init}

	\While{\rm{agent $i$ not reaching its target} } 
	{
		\If{\textit{ReplanFlag} is \textnormal{\textbf{True}}}
                {
                        $\sigma^i, \ \mathcal{T}^i \leftarrow $ \texttt{HomotopyAwarePP}($p^i$, $p^i_{\text{target}}$,  $\mathcal{V}$, $\mathcal{P}$, $\mathcal{T}$) \label{alg:complete-algorithm:HomotopyAwarePP}

                        \textit{ReplanFlag} $\leftarrow$ \textbf{False}
                }

                $\mathcal{T}, \ \mathcal{T}raj \leftarrow $ \texttt{Communication}($\mathcal{N}^i$, $\mathcal{T}^i$, $\mathcal{T}raj^i$) \label{alg:complete-algorithm:communication}

                $\mathcal{T}raj^i \leftarrow $ \texttt{TrajOpt}($\mathcal{T}raj$, $\mathcal{O}$, $\sigma^i$) \label{alg:complete-algorithm:trajectory-optimization}

                \texttt{ExeTraj}($\mathcal{T}raj^i$) \label{alg:complete-algorithm:execute-trajectory}

                \textit{ReplanFlag} $\leftarrow$ \texttt{ReplanCheck}()  \label{alg:complete-algorithm:replan-check}
	}  
\end{algorithm}

The complete algorithm is proposed in Alg.~\ref{alg:complete-algorithm}. 
Initially, each agent preprocesses the environment to detect the complete set of passages, $\mathcal{P}$ (line~\ref{alg:complete-algorithm:complete-passages-detection}) and generate the Voronoi graph $\mathcal{V}$ (line~\ref{alg:complete-algorithm:voronoi-graph}). 
After planning begins, a boolean flag \textit{ReplanFlag} is checked to determine whether to apply the homotopy-aware optimal path planning (line~\ref{alg:complete-algorithm:HomotopyAwarePP}).
If the flag is set to \textbf{True}, the agent will plan its path.
Then, the agent communicates within its neighbor set $\mathcal{N}^i$ to get others' passage time maps $\mathcal{T}$ and trajectories $\mathcal{T}raj$ (line~\ref{alg:complete-algorithm:communication}).
Next, the agent will optimize its trajectory (line~\ref{alg:complete-algorithm:trajectory-optimization}) and executes its planned trajectory $\mathcal{T}raj^i$ (line~\ref{alg:complete-algorithm:execute-trajectory}).
Finally, it checks whether to replan (line~\ref{alg:complete-algorithm:replan-check}). 
The whole loop will be carried out indefinitely until this agent reaches its target position.

The computational complexity of the proposed method is as follows:
The offline phase includes complete passage detection and Voronoi graph generation.
Collision detection between two polygons has a complexity of $O(n_1 + n_2)$, where $n_1$ and $n_2$ are the number of vertices in the polygons \cite{gilbert2002fast}. 
Based on this, complete passage detection requires $\binom{|\mathcal{\tilde{O}}|}{2} \cdot |\mathcal{\tilde{O}}| + \frac{4n_p l_p}{\kappa}$ collision checks, where $|\mathcal{\tilde{O}}|$ is the number of obstacles, $n_p$ is the number of passages, and $l_p$ is the average passage length. It also requires $\binom{|\mathcal{\tilde{O}}|}{2} \cdot |\mathcal{\tilde{O}}|$ extended visibility checks, each with complexity $O(n_1)$.
Voronoi graph generation has a complexity of $O(m^2 \log m)$, where $m$ is the total number of edges of all obstacles.
The online planning process includes global path planning with complexity $O(n_{p}b^d)$, where $b$ is the branching factor and $d$ is the solution path depth, and final optimization, which involves a QCQP problem with $6K$ variables and $(3+N+n_{\text{near}}) \cdot K$ constraints, where $n_{\text{near}}$ is the number of obstacles around the agent.
Thanks to efficient optimization tools (e.g., \cite{Verschueren2021}), the optimization can be solved within tens of milliseconds.

\section{Numerical Experiments}

This section validates the proposed method through simulations and experiments.
We implemented the Voronoi graph generation based on the source code of \cite{binder2019multi}.
The communication between agents is implemented by  \texttt{ROS1}.
The optimal control problem is solved by \texttt{acados} \cite{Verschueren2021}.
Lastly, the local planner's ability to handle deadlock situations is enhanced by incorporating the deadlock resolution scheme from \cite{chen2024deadlock}.
All simulations are conducted on a desktop computer with an Intel Core i7-13700K @3.4GHz CPU and a 32GB RAM, 
employing multiple \texttt{ROS} nodes to simulate distinct agents.

\subsection{Implementation Details}
% \alpha l_max \gamma \beta \bar{v}^i
The proposed algorithm is built upon several key hyperparameters. 
The proper selection of these hyperparameters directly dictates user-defined system behaviors.
Moreover, these hyperparameters feature intuitive physical interpretations, which enables users to tune them in a straightforward manner.
$\bar{v}^i$ denotes a predefined average velocity. Setting it closer to the actual average velocity can enable agents to cooperate efficiently during the initial phase.
$l_\text{max}$ denotes the maximum passage width. To characterize narrow passages where agents cannot pass each other, $l_\text{max}$ can be set as twice the agent diameter.
A larger value of $\alpha < 0$ implies lower tolerance for temporal conflicts, while a smaller value implies higher tolerance.
$\beta$ defines the time threshold for triggering the update of the self-passage time map, while $\gamma$ serves as the threshold for initiating replanning.

During the following simulations, we set those hyperparameters as follows:
$\bar{v}^i= 0.5\text{m/s}^2$; $l_\text{max} = 0.8\text{m}$; $\alpha = -0.3$; $\beta = 1.0 \text{s}$; $\gamma = 1.5$.
We set the weight as $\lambda_P = 50.0$; $\lambda_H = 500.0$.
The trajectory optimization parameters are set as $K = 12$; $h = 0.2\text{s}$; $u_\text{max} = 2.0\text{m/s}^2$; $v_\text{max} = 1.0\text{m/s}^2$; $r = 0.2\text{m}$.
All parameters remain unchanged unless otherwise specified.

\subsection{Results}

\begin{figure}[t]
        \centering
        \includegraphics[width=\linewidth]{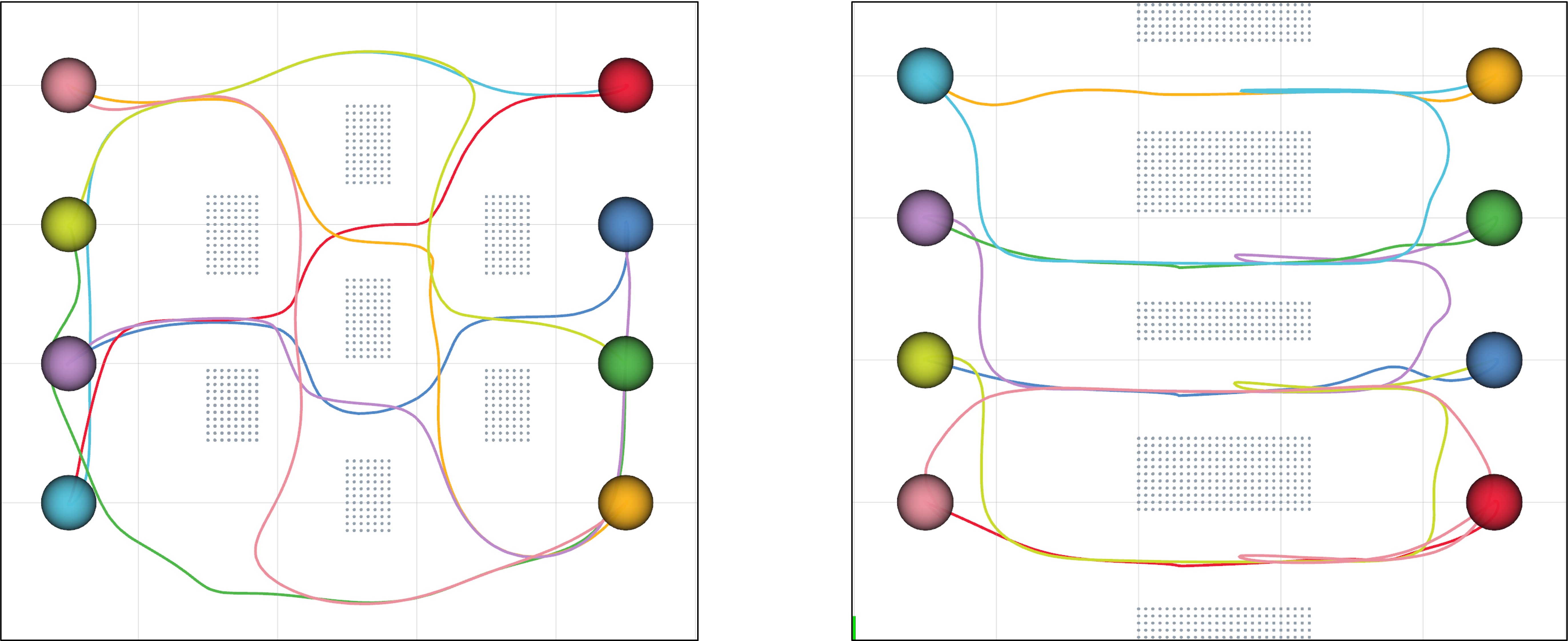}
        \caption{\textbf{Left}: 8 agents crossing through scattered obstacles. \textbf{Right}: 8 agents exchanging their positions through four corridors that only allow one agent to pass at a time.}
        % \vspace{-0.6cm}
        \label{fig:results-demo}
\end{figure}

% crossing

% total length: 52.1m
% total time: 19.3s
% global planning time: 3.1ms
% local replanning time: 9.1ms
% no replan

The first simulation involves 8 agents navigating through 7 scattered obstacles in a $5.0 \text{m} \times 4.5 \text{m}$ environment.
The agents are tasked with swapping positions with those at the opposite corners.
As shown in the left figure in Fig.~\ref{fig:results-demo}, 8 agents successfully reach their targets with full spatio-temporal coordination.
Each agent plans its global path in $3.1\text{ms}$ on average.
The total trajectory length is $52.1\text{m}$ and maximum time is $19.3\text{s}$. 
The time consumption of trajectory optimization at each replanning interval is $9.1\text{ms}$ on average.

% 16 agents fig1
% total length: 219.6m
% total time: 46.4s
% global planning time: 41.5ms
% local replanning time: 19.8ms
% no replan

To further evaluate scalability, a more complex $10\text{m}\times 10\text{m}$ environment with 35 obstacles is considered, where 16 agents exchange positions as shown in Fig.~\ref{fig:compare-trajectory}. Despite the larger map and more agents, the proposed method performs well. In offline stage, complete passage detection takes $1.5 \text{ms}$, and Voronoi graph generation takes $0.23 \text{s}$. The Voronoi graph reduces the search space, and homotopy-aware path planning ensures effective agent cooperation. On average, each agent plans its global path in $41.5\text{ms}$, with local trajectory optimization taking $19.8 \text{ms}$ per replanning interval.

% replannning
% total length: 50.9m
% total time:  24.5s
% global planning time: 1.6ms
% global replanning time: 1.9ms
% local replanning time: 9.3ms
% replan: 4

A more challenging scenario is shown in the right figure of Fig.~\ref{fig:results-demo}.
In this case, 8 agents are required to exchange positions through four narrow corridors that only allow one agent to pass at a time. 
We set $\gamma = 0.9$ to lower the threshold for replanning.
Agents must replan their paths, as a one-shot planning approach could lead to severe congestion.
With the aid of complete passage detection and the passage time map, agents can successfully detect potential spatio-temporal conflicts and replan their paths in real-time.

\subsection{Comparison with Baselines}

\begin{figure*}[t]
        \centering
        \includegraphics[width=\textwidth]{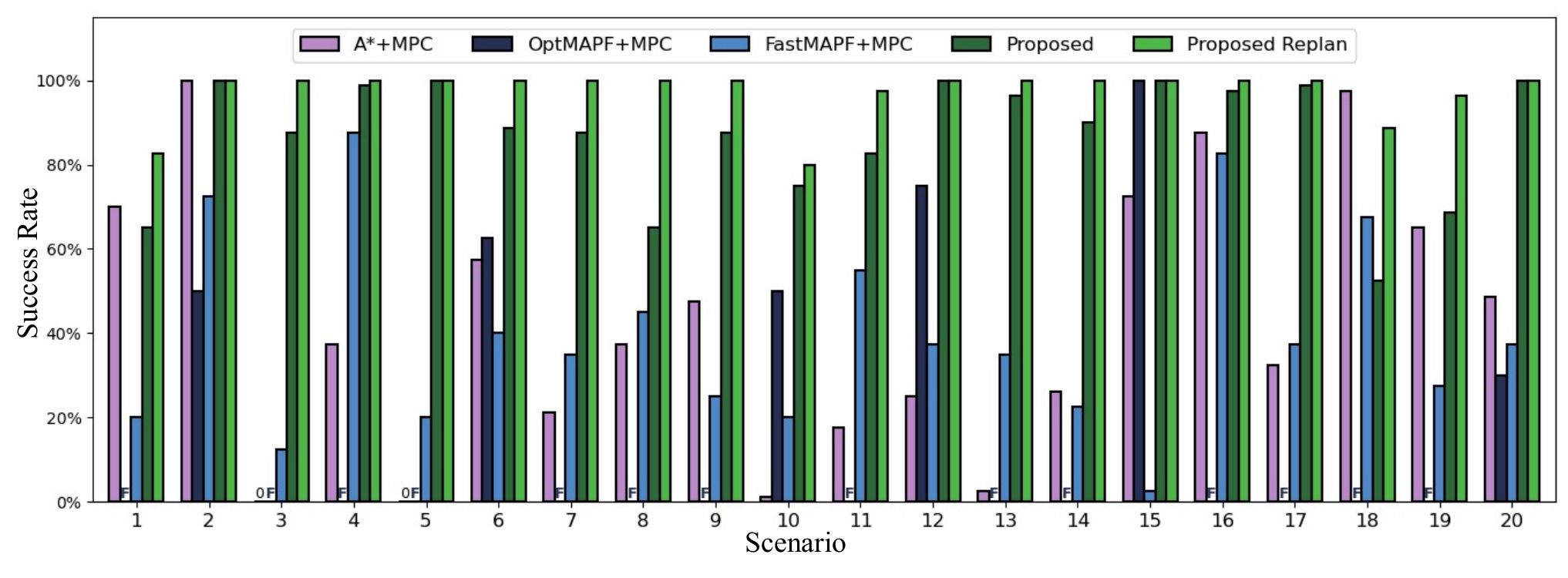}
        % \vspace{-0.4cm}
        \caption{Swarm success rate results. The proposed method maintains a high success rate across all scenarios. With online replanning, its robustness is further enhanced.
        ``F” indicates that the global planner fails to obtain a result within 60 minutes. ``0'' denotes zero percent success rate. All data represents the average of 10 independent runs.}
        % \vspace{-0.4cm}
        \label{fig:random-sim-results}
\end{figure*}

Then, the overall performance of the proposed method is evaluated.
In the following simulations, we keep the local planner unchanged (a MPC-based method \cite{chen2023multi}) and replace the global planner with the single-agent A*, an optimal MAPF, a fast MAPF and the proposed homotopy-aware optimal path planning (with and without online global replanning).
Conflict-based search (CBS)~\cite{sharon2015conflict} is an optimal MAPF algorithm that finds the optimal path for each agent in a shared environment. However, its high computational complexity makes it unsuitable for real-time applications. In contrast, fast MAPF algorithms provide quicker results at the cost of optimality. Prioritized planning~\cite{silver2005cooperative} is used here, which plans paths based on agent priorities, though it is neither optimal nor complete~\cite{stern2019multi}.

\begin{table}[t]
\centering
\caption{Comparison of different methods.}
% \hspace{-0.5cm}  % 将表格向左移动2cm
\label{tab:metrics-comparison}
\begin{threeparttable}
\begin{tabular}{@{}cccccc@{}}
\toprule
\multirow{2}{*}{} & \multicolumn{3}{c}{Scenario Success Rate [\%]} & Maximum & Total \\ \cmidrule(lr){2-4}
                        & avg.    & max.     & min.    &        Time [s]                &        Length [m]                 \\ \midrule
A*+MPC            & 13.0    & 20.0     & 10.0    & 36.4                     & 58.6                    \\
OptMAPF+MPC       & 5.0     & 5.0      & 5.0     & 25.5                     & 56.5                   \\
FastMAPF+MPC      & 4.0     & 5.0      & 0.0     & 35.1                     & 60.3                   \\
Proposed          & 48.5    & 55.0     & 40.0    & 30.0                     & 68.4                   \\
Proposed (replan) & 90.5    & 100.0    & 85.0    & 38.6                     & 70.1                \\ \bottomrule
\end{tabular}
% \\
% \vspace{0.15cm} % 加入适当的空白
% \footnotesize
% \textit{}
\begin{tablenotes}
        \footnotesize
        % \item[1] Scenario Success Rate [\%];
        % \item[2] Average Maximum Time [{\rm s}];
        % \item[3] Average Total Trajectory Length [{\rm m}];
        \item[*] All data represents the average of 10 independent runs. Each run contains 20 scenarios. 
\end{tablenotes}
% \vspace{-0.5cm}
\end{threeparttable}
\end{table}

Twenty distinct $6\text{m}\times 6\text{m}$ crowded scenarios, similar to the map in Fig.~\ref{fig:framework}, are randomly generated with 19 obstacles.
In each scenario, eight agents are placed at fixed start position and required to swap their positions with the ones on the opposite side.
A scenario is considered failed if any agent fails to reach its target within 100s.
The following four metrics are recorded and depicted in Table~\ref{tab:metrics-comparison} and Fig.~\ref{fig:random-sim-results}:
\begin{itemize}
        \item Scenario success rate: the percentage of successful scenarios in total 20 scenarios.
        \item Swarm success rate: the percentage of successful agents in the swarm in each scenario.
        \item Average total trajectory length of successful scenarios.
        \item Average maximum time of successful scenarios.
\end{itemize}
The results show that the proposed method achieves the highest scenario and swarm success rates, with online replanning further enhancing its adaptability. 
Unlike single-agent A*, it considers other agents' passage time maps, improving spatio-temporal coordination and reducing congestion.
CBS fails in 14 out of 20 scenarios due to the high computational cost of finding optimal solutions, while prioritized planning finds solutions for all scenarios in reasonable time.
However, both optimal and fast MAPF methods still suffer from the gap between path and motion planning.
As illustrated in Fig.~\ref{fig:compare-trajectory}, agents easily get stuck in some narrow passages, due to the failure of local planner to exactly follow the global path time stamps. 
On the other hand, the proposed method addresses this by relaxing strict collision avoidance in the global planner and using soft constraints for better spatio-temporal coordination, while the local planner ensures safety and robustness.

\begin{figure} [t]
	\centering
	\includegraphics[width=\linewidth]{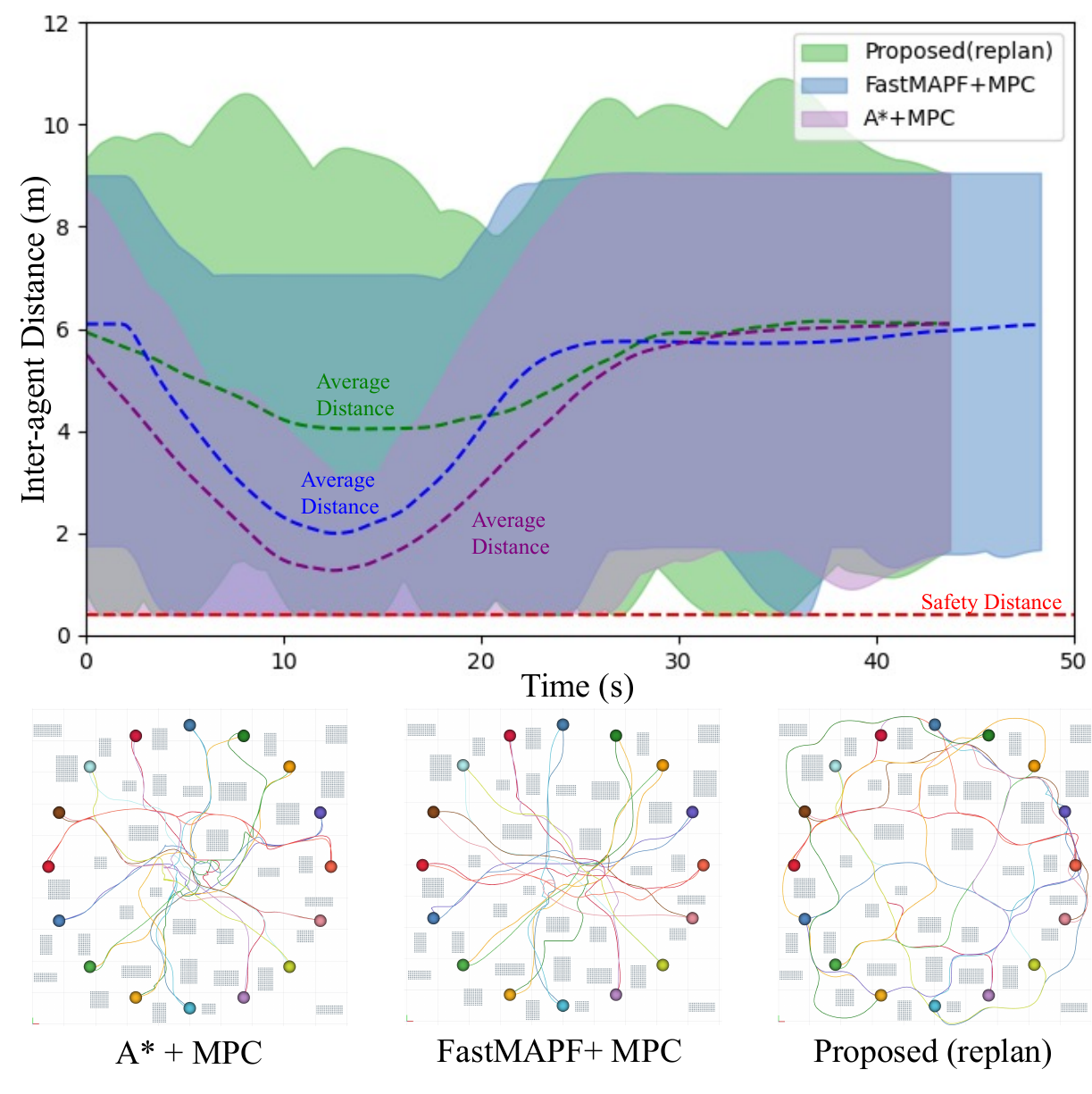}
    % \vspace{-0.6cm}
	\caption{The inter-agent distances of three methods and their corresponding trajectories. 
    The proposed method fully leverages the topological structure, maintaining the highest average distances throughout most of the time.}
    % \vspace{-0.6cm}
	\label{fig:inter-distance}
\end{figure}

As shown in Fig.~\ref{fig:inter-distance}, another scenario with a setup similar to that in Fig.~\ref{fig:compare-trajectory} is designed to demonstrate the system's ability to fully utilize the available free space for spatio-temporal coordination.
Although all three methods tested succeed, they yield different inter-agent distances.
Notably, the proposed method maintains the largest average inter-agent distance, offering a distinct advantage in preventing local congestion.
The core idea of our method is to trade additional space for improved efficiency. As shown in Table~\ref{tab:metrics-comparison}, longer total trajectory lengths and larger maximum times are observed in the proposed method, representing a trade-off for better spatio-temporal coordination.

\subsection{Evaluation on Online Global Replanning}

\begin{figure} [t]
	\centering
	\includegraphics[width=\linewidth]{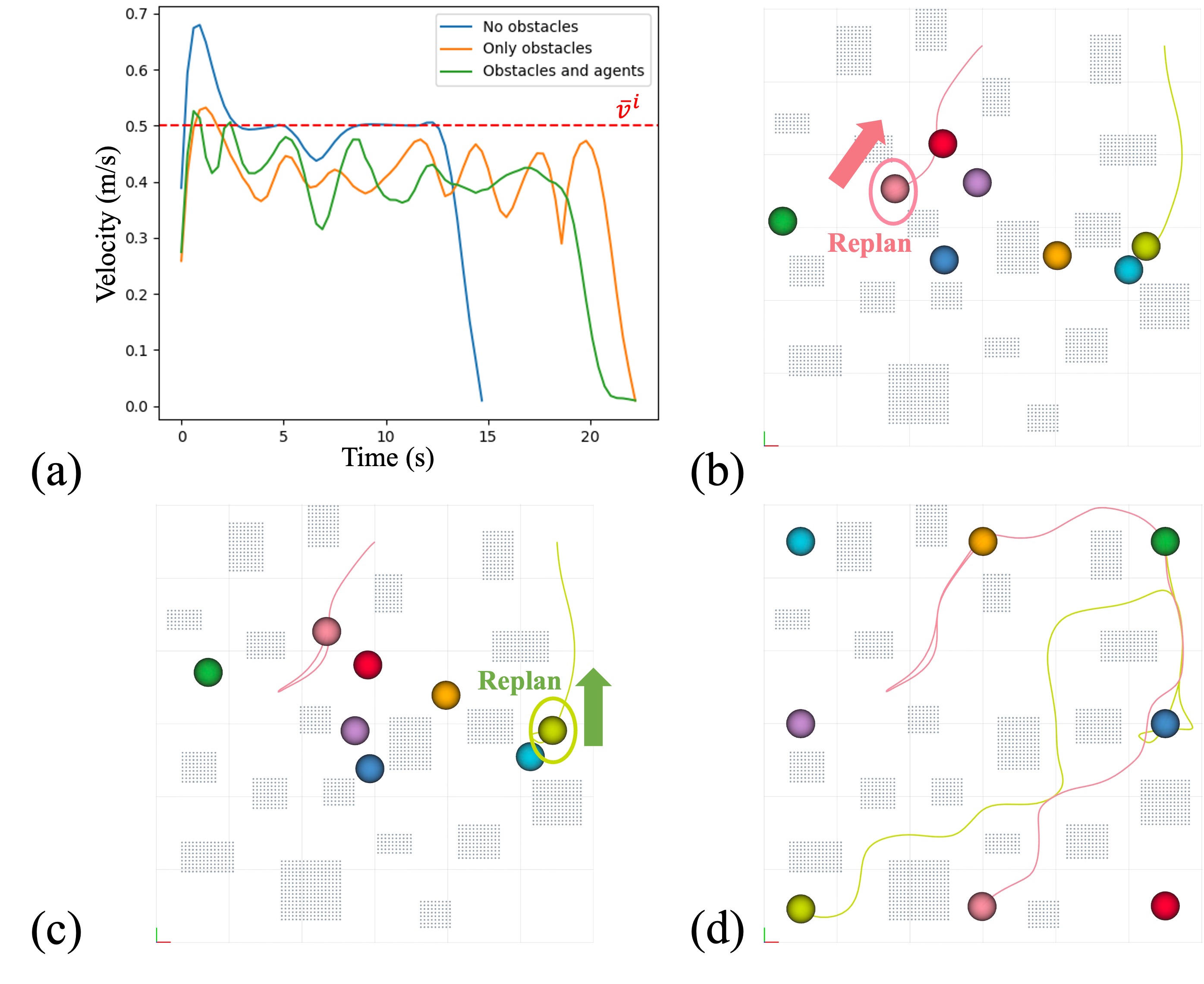}
    % \vspace{-0.4cm}
	\caption{(a) An example of agent's velocity fluctuation in free space, obstacle space and obstacle space with other agents. (b-d) Snapshots showing that two agents trigger global replanning in a crowded scenario.}
    % \vspace{-0.3cm}
	\label{fig:replanning-simulation}
\end{figure}

Finally, the online global replanning is tested by ablation study.
As mentioned in Sec.~\ref{sec:ReplanCheck}, a relatively accurate passage time map is crucial for reaching global coordination.
However, the agent's actual velocity may fluctuate in an obstacle-dense environment (see Fig.~\ref{fig:replanning-simulation}~(a)).
To evaluate the proposed mechanism, we equipped our methods with no replanning, pure replanning (start replanning when $\Delta t \ge \beta$) and the proposed online global replanning.
Three candidates are tested in a same $6\text{m}\times 6\text{m}$ crowded scenario with 19 randomly generated obstacles (as shown in Fig.~\ref{fig:replanning-simulation}~(b-d)).
Table~\ref{table:comparison-replanning-check} shows their planning results with 4, 6 and 8 agents.
From the table, we can see that when the number of agents increases, the proposed online global replanning can still help to maintain a high success rate while not triggering replanning frequently.

Fig.~\ref{fig:replanning-simulation}~(b-d) illustrate the process of the proposed replanning method.
In (b), the circled pink agent initiates replanning due to updates in the passage time maps of higher priority agents.
In (c), the circled light green agent starts replanning because of its own velocity slowdown and a potential blockage with another agent.
(d) shows the overall trajectories of the two agents after replanning.

\begin{table}[t]
\centering
\caption{Comparison of different replanning check methods.}
\vspace{-0.2cm}
\label{table:comparison-replanning-check}
\begin{threeparttable}
\begin{tabular}{@{}cccccc@{}}
        \toprule
        Num of Agents  & Methods & Success & $N_r$ & $L$ & $T$ \\ \midrule
        \multirow{3}{*}{4} & No replan          & 5/5  &  -   &   40.3     &   30.3    \\
        & Pure               &  5/5  &     2       &         40.5           &          30.9         \\
        & Proposed           &     5/5     &      0      &         40.5          &        30.3           \\ \midrule
        \multirow{3}{*}{6} & No replan          &    4/5      &     -       &          60.4          &        30.3           \\
        & Pure               &     5/5     &      28      &         64.0           &        53.6           \\
        & Proposed           &    5/5      &     15       &         61.7           &         39.4          \\ \midrule
        \multirow{3}{*}{8} & No replan          &     0/5     &      -      &        -            &         -          \\
        & Pure               &     0/5     &      -      &          -          &        -           \\
        & Proposed           &     5/5     &     29       &         77.7           &        41.7           \\ \bottomrule
\end{tabular}
\footnotesize
$N_r$: total number of replanning;  $L[{\rm m}]$: total trajectory length;
 $T[{\rm s}]$: maximum time;
 all data represents the average of 5 independent runs. 
\end{threeparttable}
\vspace{-0.2cm}
\end{table}

\subsection{Experiment Results}

\begin{figure} [t]
	\centering
	\includegraphics[width=\linewidth]{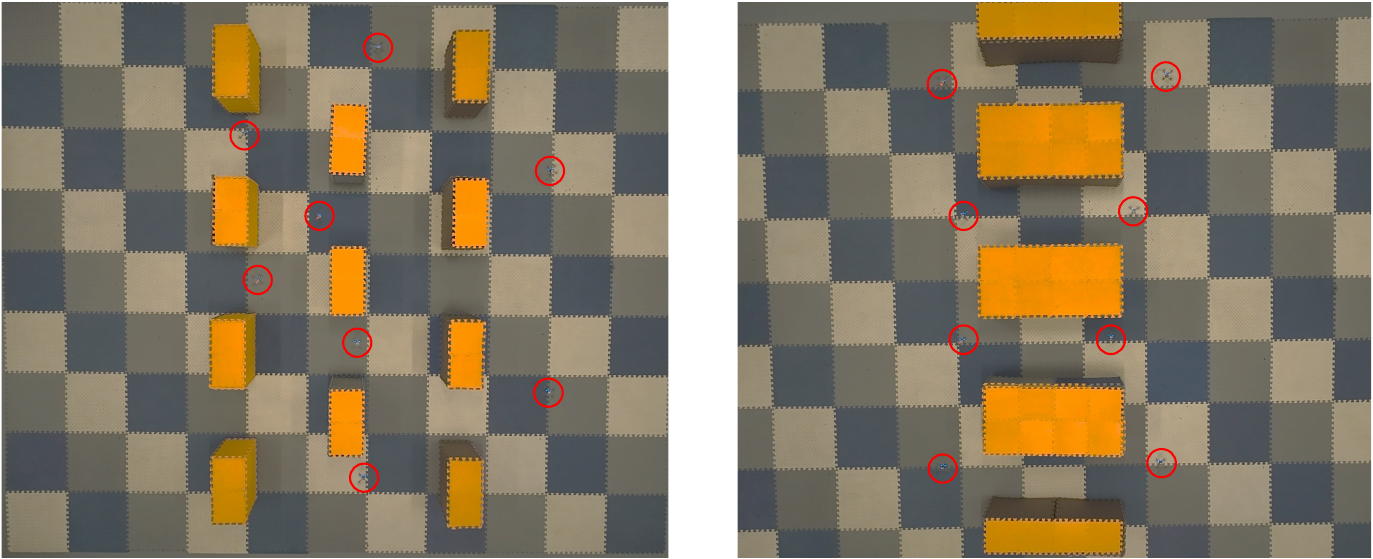}
    % \vspace{-0.5cm}
	\caption{Real-world experiments. \textbf{Left}: Eight Crazyflies navigating and exchanging positions in a scattered environment. \textbf{Right}: Eight agents traversing four narrow passages.}
    % \vspace{-0.5cm}
	\label{fig:exp-fig}
\end{figure}

For experiments, all agent computations are executed on a central computer using multiple \texttt{ROS} nodes to simulate distinct agents and transmit control commands to individual Crazyflie drones.
All hardware experiments are executed on the platform of crazyswarm \cite{preiss2017crazyswarm} with the help of a motion capture system OptiTrack.

As shown in Fig.~\ref{fig:exp-fig}, in the first experiment, eight Crazyflie drones perform a crossing maneuver in an area with 11 obstacles, each measuring $0.6\text{m} \times 0.3\text{m}$. Our proposed method effectively facilitates global coordination among the drones, significantly reducing local congestion and blockages.
In the second experiment, eight Crazyflie drones must swap positions with those on the opposite side by navigating through four corridors, with only one drone allowed to pass through each corridor at a time.
In this scenario, it is impossible to plan a globally coordinated spatio-temporal path in one go. However, through replanning, the drones can coordinate and adjust their paths online, ultimately completing the task.
Full demonstration can be found in the supplementary video.

% \vspace{-0.2cm}
\section{Conclusion}

This paper presents a novel two-layer distributed trajectory planning framework for multi-agent systems in obstacle-dense environments. 
The global layer uses homotopy-aware optimal path planning for efficient coordination and conflict reduction, while the local layer employs MPC-based trajectory optimization for safety. 
An online replanning strategy ensures real-time adaptability. 
Simulations and experiments demonstrate its effectiveness.
Future work will focus on extending the framework to more complex systems and more dynamic scenarios.

\addtolength{\textheight}{-12cm}   % This command serves to balance the column lengths
                                  % on the last page of the document manually. It shortens
                                  % the textheight of the last page by a suitable amount.
                                  % This command does not take effect until the next page
                                  % so it should come on the page before the last. Make
                                  % sure that you do not shorten the textheight too much.

%%%%%%%%%%%%%%%%%%%%%%%%%%%%%%%%%%%%%%%%%%%%%%%%%%%%%%%%%%%%%%%%%%%%%%%%%%%%%%%%

%%%%%%%%%%%%%%%%%%%%%%%%%%%%%%%%%%%%%%%%%%%%%%%%%%%%%%%%%%%%%%%%%%%%%%%%%%%%%%%%

%%%%%%%%%%%%%%%%%%%%%%%%%%%%%%%%%%%%%%%%%%%%%%%%%%%%%%%%%%%%%%%%%%%%%%%%%%%%%%%%
% \section*{Appendix}

% Appendixes should appear before the acknowledgment.

% \section*{Acknowledgment}

% %%%%%%%%%%%%%%%%%%%%%%%%%%%%%%%%%%%%%%%%%%%%%%%%%%%%%%%%%%%%%%%%%%%%%%%%%%%%%%%%

% References are important to the reader; therefore, each citation must be complete and correct. If at all possible, references should be commonly available publications.

\bibliographystyle{IEEEtran}
\bibliography{ref}

\end{document}